\documentclass[conference]{IEEEtran}
\IEEEoverridecommandlockouts
% The preceding line is only needed to identify funding in the first footnote. If that is unneeded, please comment it out.
\usepackage{cite}
\usepackage{amsmath,amssymb,amsfonts,amsthm}
\usepackage{algorithmic}
\usepackage{graphicx}
\usepackage{textcomp}
\usepackage{xcolor}
\def\BibTeX{{\rm B\kern-.05em{\sc i\kern-.025em b}\kern-.08em
    T\kern-.1667em\lower.7ex\hbox{E}\kern-.125emX}}

\usepackage{thmtools}
\usepackage{thm-restate}

\newtheorem{remark}{Remark}%
% \newtheorem{theorem}{Theorem}
% \newtheorem{corollary}{Corollary}
% \newtheorem{proof}{Proof}
% \newtheorem{lemma}{Lemma}

% \newenvironment{thmbis}[1]
%  {\renewcommand{\theorem}{\ref{#1}}%
%  \addtocounter{theorem}{-1}%
%  \begin{theorem}}
%  {\end{theorem}}

\usepackage{algorithmic}
\usepackage[ruled,norelsize]{algorithm2e}

\usepackage[caption=false,font=normalsize,labelfont=sf,textfont=sf]{subfig}

\begin{document}

\title{Trajectory-Oriented Policy Optimization with Sparse Rewards\\
% {\footnotesize \textsuperscript{*}Note: Sub-titles are not captured in Xplore and should not be used}
% \thanks{This work was supported by the National Key R\&D Program of China (2022ZD0116401), Fundamental Research Funds for the Central Universities, and the National Natural Science Foundation of China(Grant Nos. 62141605).}
}

\author{\IEEEauthorblockN{1\textsuperscript{st} Guojian Wang}
\IEEEauthorblockA{\textit{School of Mathematical Sciences} \\
\textit{Beihang University}\\
Beijing, China \\
wgj@buaa.edu.cn}
~\\
\and
\IEEEauthorblockN{2\textsuperscript{nd} Faguo Wu*}
\IEEEauthorblockA{\textit{Institute of Artificial Intelligence} \\
\textit{Beihang University}\\
Beijing, China \\
faguo@buaa.edu.cn}
*Corresponding author
~\\
\and
\IEEEauthorblockN{3\textsuperscript{rd} Xiao Zhang}
\IEEEauthorblockA{\textit{School of Mathematical Sciences} \\
\textit{Beihang University}\\
Beijing, China \\
xiao.zh@buaa.edu.cn}
*Corresponding author
~\\
}

% \author{\IEEEauthorblockN{Guojian Wang}
% \IEEEauthorblockA{School of Mathematical Sciences\\
% Beihang University\\
% Beijing, China\\
% wgj@buaa.edu.cn}
% \and
% \IEEEauthorblockN{Authors Name/s per 2nd Affiliation (Author)}
% \IEEEauthorblockA{line 1 (of Affiliation): dept. name of organization\\
% line 2: name of organization, acronyms acceptable\\
% line 3: City, Country\\
% line 4: Email: name@xyz.com}
% }

\maketitle

\begin{abstract}
The mastery of deep reinforcement learning (DRL) proves challenging when the environmental reward signals are sparse. These limited rewards indicate whether the task has been partially or entirely accomplished, and the agent must conduct various exploration actions before the agent receives meaningful feedback. Therefore, most existing DRL exploration algorithms cannot acquire practical behavior policies within a reasonable term. This study introduces an RL method that leverages offline demonstration trajectories for a faster and more efficient online RL in environments with sparse rewards. The crucial insight we provide is to treat offline demonstration trajectories as guidance, rather than merely imitation, allowing our method to identify a policy with a distribution of state-action visitation that is marginally in line with offline demonstrations. A new trajectory distance based on maximum mean discrepancy (MMD) is presented and cast as a distance-constrained optimization problem. As a result, we demonstrate that the optimization problem can be streamlined by using a policy-gradient algorithm, incorporating rewards based on insights acquired from offline demonstrations. In this study, the proposed algorithm is evaluated across a navigation task with a discrete action space and two continuous locomotion control tasks. According to our experimental findings, our suggested algorithm offers significant advantages over baseline methods in terms of exploring diverse policy spaces and acquiring optimal policies.
\end{abstract}

\begin{IEEEkeywords}
deep reinforcement learning, sparse rewards, efficient exploration, offline demonstrations 
\end{IEEEkeywords}

\section{Introduction}
The iterative learning process of reinforcement learning (RL) has demonstrated remarkable efficacy in addressing intricate decision-making challenges~\cite{mnih2015human, mnih2016asynchronous, silver2016mastering}. Nevertheless, when confronted with sparse or delayed environmental reward signals, these RL methodologies may encounter inefficiencies in sample complexity and suboptimal performance. This primarily stems from the inherent challenge of navigating environments effectively and the limited availability of immediate reward cues~\cite{yang2021exploration, wang2023adaptive}. Additionally, real-world tasks often feature ambiguously defined objectives, making it challenging to devise precise reward functions for these endeavors~\cite{jeon2020reward}. Consequently, this practical dilemma imposes demanding criteria on DRL techniques to be viable within sparse reward scenarios. By addressing this limitation, DRL is made more applicable to real-world scenarios as well as its potential impact is significantly broadened.

Reward engineering is an alternative and direct way to provide dense and significant reward signals, which mainly refers to manually designing reward function~\cite{turner2020conservative, sumers2021learning}. Unfortunately, building a specified reward function may require extensive instrumentation, such as thermal cameras to detect liquids pouring~\cite{schenck2017visual}, and accelerometers to detect door opening~\cite{yahya2017collective}. Moreover, it may still be hard to construct a handcrafted and suitable reward function because of undue reward exploitation. Specifically, RL agents often discover unexpected and unintended ways to fulfill high returns and cater to biased objective functions~\cite{turner2020conservative}.

Another promising approach to conquering the sparse-reward difficulty is learning from demonstrations (LfD). LfD methods leverage offline expert demonstrations to enhance the performance of DRL algorithms~\cite{oh2018self,hester2018deep,wang2023learning}. Some LfD methods only regard demonstrations as data augmentations without fully utilizing them in the policy optimization process~\cite{libardi2021guided}. Many other studies accelerate policy optimization by pre-training the policy with demonstrations in a supervised manner~\cite{silver2016mastering}. Furthermore, recent LfD research proposes encouraging the agent to mimic the expert action distribution, which takes inspiration from imitation learning (IL). However, these methods often require offline demonstrations to be perfect and sufficient, burdening human experts more~\cite{zhu2022self}. 

This study develops a novel and practicable approach to encourage efficient exploration and achieve reliable credit assignments in tasks with sparse rewards. We call our method \textbf{T}rajectory \textbf{O}riented \textbf{P}olicy \textbf{O}ptimization (TOPO). Our crucial insight is that by regarding offline demonstration trajectories as guidance, rather than mimicking them, the agent is incentivized to acquire a behavior policy whose state marginal visitation distribution matches that of offline expert demonstrations. In this manner, TOPO avoids introducing complicated models or prior knowledge to obtain a reliable and feasible reward function. Specifically, this study introduces a novel distance measure between trajectories' state-action visitation distributions based on maximum mean discrepancy (MMD). We reformulate a novel trajectory-guided policy optimization problem. Subsequently, this study illustrates that a policy-gradient algorithm can be obtained from this optimization problem by incorporating intrinsic rewards derived from the distance between trajectories. Evaluation of the proposed algorithm is conducted across extensive discrete and continuous control tasks characterized by sparse and misleading rewards. Experiments conclusively show that TOPO outperforms other baseline approaches in diverse environments where rewards are sparse.

\section{Preliminaries}
\label{sec:backgroung}
\subsection{Reinforcement Learning}
In this study, we explore a discrete-time Markov decision process (MDP) represented by a tuple $(\mathcal{S}, \mathcal{A}, P, r, \rho_0, \gamma)$. Here, $\mathcal{S}$ denotes a state space, which may be discrete or continuous, while $\mathcal{A}$ represents an action space, which could also be discrete or continuous. The probability distribution of transition $P: \mathcal{S}\times\mathcal{A}\rightarrow\Pi(\mathcal{S})$ captures the transition probability distribution function between states, with $\Pi(\mathcal{S})$ denoting the space of probability distributions over states. Additionally, the reward function $r: \mathcal{S}\times\mathcal{A}\rightarrow[R_{min}, R_{max}]$ defines the rewards associated with state-action pairs, where $R_{min}$ and $R_{max}$ denote the minimum and maximum possible rewards, respectively. The initial state distribution is denoted by $\rho_0$, and $\gamma$ lies within the interval $[0,1]$ and represents the discount factor. The policy $\pi_\theta$ maps an arbitrary state in $\mathcal{S}$ to an action probability distribution function over $\mathcal{A}$, i.e. $\pi_{\theta}: \mathcal{S}\rightarrow\mathcal{P}(\mathcal{A})$, where $\mathcal{P}(\mathcal{A})$ is a collection of probability distribution function over the action space. Typically, the objective function in reinforcement learning (RL) seeks to maximize the expected discounted return, which can be expressed as follows:
\begin{equation}
    \label{eq:rl_objective}
    J(\pi_{\theta}) = \mathbb{E}_{s_0,a_0,\dots}\left[\sum_{t=0}^{\infty}\gamma^{t}r(s_{t},a_t)\right].
\end{equation}

The preceding equation entails that $s_0$ is selected randomly from the distribution $\rho_0(s_0)$, $a_t$ is chosen from $\pi_{\theta}(a_t \vert s_t)$, and $s_{t+1}$ is randomly selected based on $P(s_{t+1}\vert s_t,a_t)$. The standard definition of the state-action value function $Q$ is given as follows:
\begin{equation}
  \label{eq:Q_func}
  Q(s_t, a_t)=\mathbb{E}_{s_{t+1}, a_{t+1},\dots}\left[\sum_{k}^{\infty}\gamma^{k} r(s_{t+k},a_{t+k})\right],
\end{equation}
and the value function of the state is expressed as:
\begin{equation}
  \label{eq:V_func}
  V(s_t)=\mathbb{E}_{a_t, s_{t+1}, a_{t+1},\dots}\left[\sum_{k}^{\infty}\gamma^{k} r(s_{t+k},a_{t+k})\right],
\end{equation}
Within this context, $s_0$ is selected in accordance with $\rho_0(s_0)$, $a_t$ is chosen from $\pi_{\theta}(a_t \vert s_t)$, and $s_{t+1}$ is determined based on $P(s_{t+1}\vert s_t,a_t)$. Following this, we obtain the definition of the advantage function:
\begin{equation}
  \label{eq:adv_func}
  A(s_t, a_t) = Q(s_t, a_t) - V(s_t).
\end{equation}

% If $\mathbb{P}(s_t=s\vert \pi_\theta)$ represents the likelihood of $s_t=s$ given the stochasticity introduced by $\pi_\theta$, $P$, and $\rho_0$, we can define the corresponding discounted state visitation distribution $d_\pi$ as follows: $d_\pi(s) = (1-\gamma)\sum_{t=0}^{\infty}\gamma^t \mathbb{P}(s_t=s\vert \pi_\theta)$.

\subsection{Maximum Mean Discrepancy}
The Maximum Mean Discrepancy (MMD) emerges as a pivotal metric for probability assessment, discerning the disparity between any pair of probability distributions~\cite{gretton2006kernel, Gretton2012OptimalKC}. Suppose $p$ and $q$ represent two probability distribution functions defined in a space $\mathbb{X}$. Let $x$ symbolize the element independently drawn from $p$ and $y$ represent the element from the distribution $q$, respectively. Subsequently, the precise definition of the MMD measure between $p$ and $q$ is expressed as follows:

When $k(\cdot.\cdot)$ is the kernel function of a Reproducing Kernel Hilbert Space (RKHS) $\mathcal{H}$. In practice, the kernel function can chosen as a Gaussian or Laplace kernel function. Following this, the definition of the MMD measure between distributions $p$ and $q$ is then specified, as referenced by~\cite{Gretton2012OptimalKC, wang2023policy}.
\begin{equation}
  \label{equ: MMD_rkhs}
  \begin{aligned}
    \mathrm{MMD}^2(p, q, \mathcal{H}) = \mathbb{E}[k(x, x^\prime)] - 2\mathbb{E}[k(x, y)] + \mathbb{E}[k(y, y^\prime)],
  \end{aligned}
\end{equation}
where $x, x^\prime\ \mathrm{i.i.d.}\sim p$ and $y, y^\prime\ \mathrm{i.i.d.} \sim q$.

\section{Proposed Approach}
\label{sec:approach}
In this section, we introduce a method for trajectory-guided exploration in DRL, aiming to tackle the issue of challenging exploration. To tackle this problem, we reformulate a novel constrained policy optimization problem and derive a reliable intrinsic reward function based on the proposed MMD distance. Our method can promote the adjustment of policy parameters to generate a policy whose state-action visitation aligns with the expert's demonstrations.

\subsection{Trajectory-Guided Exploration Strategy}
\label{sec:optimization_problem}
We suggest reformulating a unique constrained optimization problem to promote effective exploration in environments with sparse rewards. Assume a set $\mathcal{M}$ comprises offline demonstration trajectories leading to imperfect sparse rewards, indicating that the optimal policy may not generate these demonstrations. Our aim is to direct the agent towards exploration within the specified region of the state space while following the demonstration trajectories, thus minimizing unnecessary exploration. One strategy to achieve this goal involves reducing the disparity between the current trajectories and the demonstration trajectories. To quantify this disparity, we utilize the squared $\mathrm{MMD}$ distance measure, defined in~\eqref{equ: MMD_rkhs}, which measures the difference between various trajectories.

During each training iteration, multiple trajectories are generated based on $\pi_\theta$ and maintained in the on-policy buffer $\mathcal{B}$. We treat each trajectory $\tau$ within $\mathcal{B}$, as well as the offline demonstration dataset $\mathcal{M}$, as a deterministic behavior policy. Subsequently, the state-action visitation distribution $\rho_\tau$ is computed for each trajectory. We then calculate the squared $\mathrm{MMD}$ distance between the different state marginal distributions of the expert and current behavioral policies. To be precise, following~\eqref{equ: MMD_rkhs}, the squared MMD distance between the distributions of state-action within trajectories is expressed as:
\begin{equation}
  \label{equ:MMD_traj}
  \begin{aligned}
  \mathrm{MMD}^2(\tau, \upsilon, \mathcal{H}) &= \underset{x, x^\prime \sim \rho_\tau}{\mathbb{E}} \left[k\left(x, x^\prime\right)\right] \\ 
  &- 2\underset{\begin{subarray}{c}x \sim \rho_\tau \\ y \sim \rho_\upsilon\end{subarray}}{\mathbb{E}} \left[k(x, y)\right] \\
  &+ \underset{y, y^\prime \sim \rho_\upsilon}{\mathbb{E}} \left[k(y, y^\prime)\right].
  \end{aligned}
\end{equation}

\renewcommand{\algorithmicrequire}{\textbf{Input:}}
\renewcommand{\algorithmicensure}{\textbf{Output:}}
% \removelatexerror
\begin{algorithm}[htb]
\caption{TOPO based on PPO}
\label{algo:tgppo}
\begin{algorithmic}[1]
  \REQUIRE policy update frequency $K$, expert demonstration dataset $\mathcal{M}$, episode number $N$, step number of each episode $T$
  \STATE Initialize network parameters $\theta$
  \STATE \textsc{//expert demonstration buffer}
  \STATE Initialize the demonstration buffer $\mathcal{M} \leftarrow \{\tau_{demo}\}$ 
  \FOR{$\text{each episode $n$}\in\{0,\dots,N\}$}
  \STATE Initialize the agent observation $\boldsymbol{s}_0$
  \STATE $\tau_n=\{\}$\quad\quad\quad\quad \textsc{// The set maintains the current experience}
  \STATE{$R_n = 0$}\quad\quad\quad\quad \textsc{// The variable records accumulate rewards}
  \FOR{each time step $t$ in $\{1,\dots,T\}$}
  \STATE sample a possible action $\boldsymbol{a}_t$ according to $\pi_\theta(\cdot|\boldsymbol{s}_t)$
  \STATE perform selected action $\boldsymbol{a}_t$ and obtain $\boldsymbol{r}^e_t$ and $\boldsymbol{s}^\prime_{t+1}$
  \STATE $\tau_n\leftarrow\tau_n\cup\{(\boldsymbol{s}_t, \boldsymbol{a}_t)\}$; $R_n\leftarrow R + \boldsymbol{r}^e_t$
  \ENDFOR
  \STATE Calculate the MMD distance $D_{\rm MMD}(x, \mathcal{M})$ for each state-action pair $x=(\boldsymbol{s},\boldsymbol{a})$ of this episode
  % \STATE Compute the trajectory importance $I(\tau) = \omega(\tau\vert\mathcal{M}_E)R_{j}(\tau)$
  % \FOR{each $((\boldsymbol{s},\boldsymbol{a}))$ in $\tau_e$}
  % \STATE $\boldsymbol{r}_i=\mathbb{E}_{I\sim\mathcal{M}(\boldsymbol{s},\boldsymbol{a})}[I]$\quad\quad \textsc{// Compute guidance rewards for each state-action pair}
  % \ENDFOR
  \IF{$\text{episode}\, \%\, K == 0$}
  \STATE Estimate the MMD gradient $\nabla_{\theta}D_{\rm MMD}$ using $\mathcal{M}$ and $\mathcal{B}$
  \STATE Estimate the policy gradient $\nabla_{\theta}J$ based on $\mathcal{B}$
  \STATE Calculate the final gradient $\nabla_{\theta}L = \nabla_{\theta} J - \sigma \nabla_\theta D_{\rm MMD}$
  \STATE \textsc{// Update the policy parameter}
  \STATE $\theta \leftarrow \theta + \alpha\nabla_{\theta}L$
  \ENDIF
  \ENDFOR
\end{algorithmic}
\end{algorithm}
Within this framework, $\tau$ is a member of $\mathcal{B}$, $\upsilon$ is an element of $\mathcal{M}$, and $x, x^\prime, y, y^\prime$ denote state-action pairs. $\rho_\tau$ and $\rho_\upsilon$ are used to represent the state-action marginal distributions of the trajectories $\tau$ and $\upsilon$, respectively. Furthermore, $k(\cdot, \cdot)$ in Eq.~\eqref{equ:MMD_traj} is defined as:
\begin{equation}
  \label{eq:k}
  k(x, y) = K\left(g(x), g(y)\right).
\end{equation}
In this scenario, $K(\cdot, \cdot)$ represents the kernel function of a reproducing kernel Hilbert space $\mathcal{H}$. Typically, we opt for the Gaussian kernel in practical applications.
\begin{figure*}[ht]
  \centering
  \subfloat[]{
  \includegraphics[width=5.2cm]{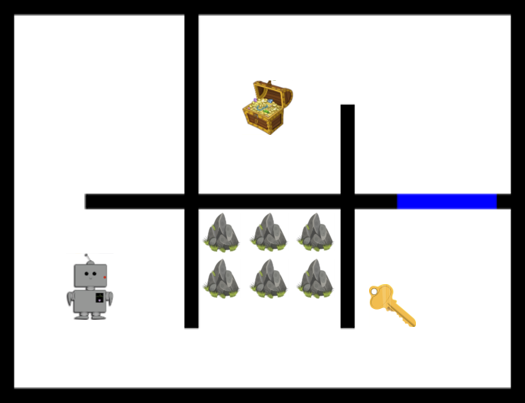}
  \label{fig:kdt_maze}
  }
  \subfloat[]{
    \includegraphics[width=4cm]{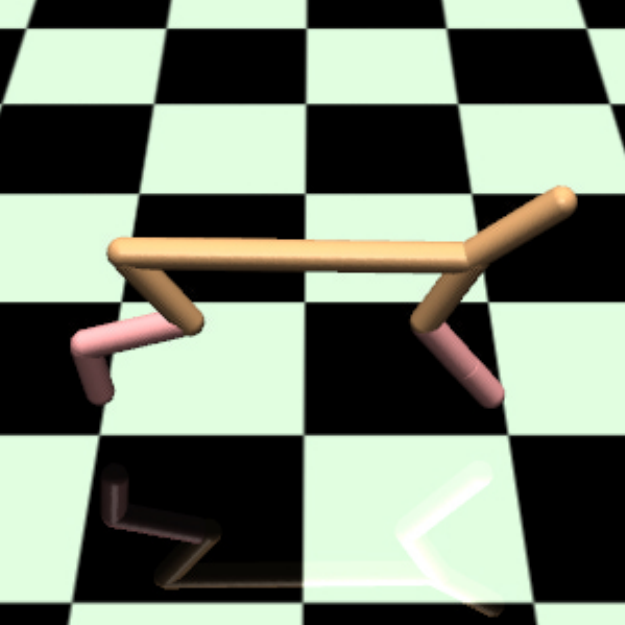}
    \label{fig:halfcheetah_envs}
  }
  \subfloat[]{
    \includegraphics[width=4cm]{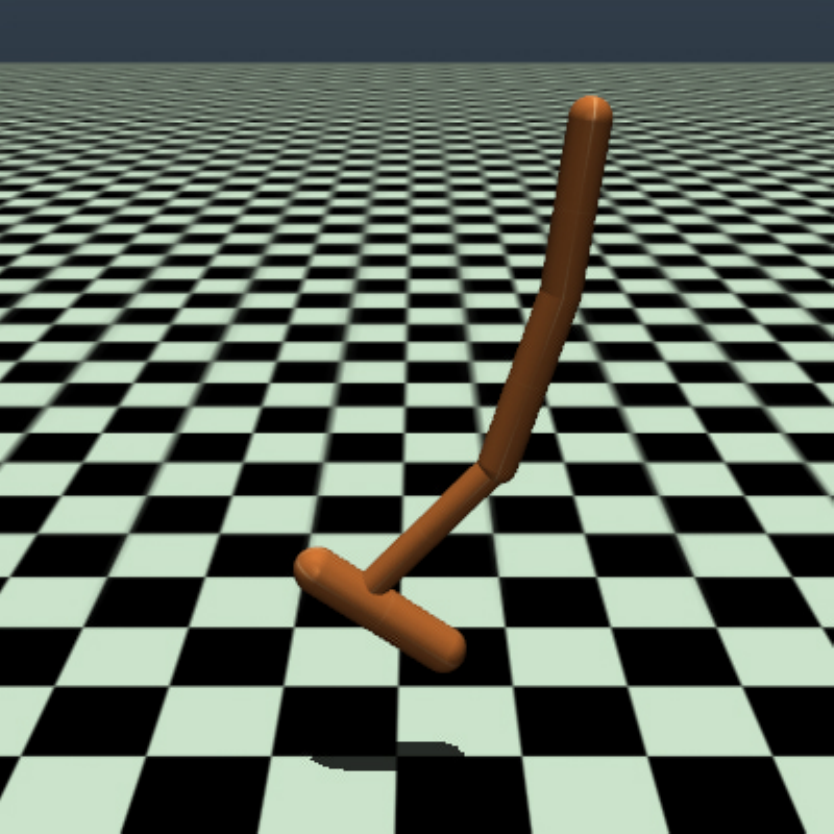}
    \label{fig:hopper_envs}
  }
  \caption{(a) Key-Door-Treasure domain; (b) SparseHalfCheetah; (c) SparseHopper.}
  \label{fig:conti_envs}
\end{figure*}

The function $g$ described in~\eqref{eq:k} enables versatile adjustment of focus within the squared ${\rm MMD}$ distance concerning different facets, including state visits, action selections, or both. Hence, the MMD measure of each state-action pair is computed with its information subset. Specifically, this information subset can be identified as the coordinate $c$ representing the center of mass (CoM), indicating that function $g$ assigns a state-action pair $(s, a)$ to the coordinate $c$. 
\begin{equation}
  \label{eq:D}
  D(x, \mathcal{M}) = \underset{\tau\in\mathcal{B}_x}{\mathbb{E}}\left[\mathrm{MMD}^2(\tau, \mathcal{M}, \mathcal{H})\right].
\end{equation}
%
% where $\mathcal{B}_x = \{\tau \vert x\in\tau, \tau\in\mathcal{B}\}$, and $\mathrm{MMD}^2(\tau, \mathcal{M}, \mathcal{H})$ is defined by:
In this context, $\mathcal{B}_x = \{\tau \vert x\in\tau, \tau\in\mathcal{B}\}$, and the squared $\mathrm{MMD}^2(\tau, \mathcal{M}, \mathcal{H})$ is delineated as:
\begin{equation}
  \mathrm{MMD}^2(\tau, \mathcal{M}, \mathcal{H}) = \min_{\upsilon\in\mathcal{M}}\mathrm{MMD}^2(\tau, \upsilon, \mathcal{H}).
\end{equation}

To emphasize the reliance of the distance calculation in Eq.~\eqref{eq:D} on the maximum mean discrepancy, we introduce $\mathrm{MMD}$ as the subscript of the sign $D$. The stochastic optimization problem, incorporating constraints associated with the $\mathrm{MMD}$ distance, is delineated as follows:
\begin{equation}
  \label{eq: constraint}
  \begin{aligned}
    &\max_{\theta} \, J(\theta), \\
    &s.t.\ D_{\rm MMD}(x, \mathcal{M}) \le \delta, \quad \forall x \in \mathcal{B}.
  \end{aligned}
\end{equation}
Here, $J$ denotes a standard objective in reinforcement learning, while $\delta$ stands for a constant distance boundary value.

\begin{remark}
  During policy optimization, the updating of replay memory $\mathcal{M}$ occurs. If a current trajectory surpasses trajectories within $\mathcal{M}$—for instance, demonstrating a higher return—it becomes viable to replace the trajectory with the lowest return in $\mathcal{M}$ with the superior one. Furthermore, initial offline trajectories may originate from human experts. The core premise of our investigation centers on aligning its policy closely with offline demonstrations by considering them as soft guidance. Unlike RLfD techniques, our method doesn't require flawless and abundant demonstrations, thereby presenting a more realistic scenario.
\end{remark}

\subsection{Practical Algorithms}
To address the optimization problem posed in~\eqref{eq: constraint}, we convert it into an unconstrained objective function, yielding the subsequent expression:
\begin{equation}
\label{equ: P_I}
  \begin{aligned}
    L(\theta, \sigma) = J(\theta)
    - \sigma\underset{x\sim\rho_\pi}{\mathbb{E}}\left[\max\left\{D_{\rm MMD}(x, \rho_\mu) - \delta, 0\right\}\right].
  \end{aligned}
\end{equation}
Here, the Lagrange multiplier $\sigma$ is a positive value.

Afterward, we calculate the policy gradient for the aforementioned unconstrained optimization task. The first part of the unconstrained problem signifies the conventional RL objective function, making the computation of its gradient straightforward~\cite{schulman2017proximal}. Next, we derive the gradient regarding policy parameters for the ${\rm MMD}$ component, enabling the effective optimization of the policy. The result is expressed in the following lemma.

\begin{restatable}{lemma}{mmdgradient}
  \label{lemma:1}
  Let $\rho_{\pi}(s, a)$ represent the state-action marginal visitation distribution function produced by the current behavior policy $\pi_\theta$. Let $D(x,\mathcal{M})$ denote the MMD distance measure of the current state-action pair $x$ to the offline demonstration buffer $\mathcal{M}$. Then, the gradient regarding $\theta$ of the ${\rm MMD}$ component is obtained as follows:
  \begin{equation}
    \label{equ:nabla_E_D_mmd}
    \begin{aligned}
      &\nabla_{\theta} \mathbb{E}\left[\max\left\{D_{\rm MMD}(x, \rho_\mu) - \delta, 0\right\}\right]\\
      =&\underset{\rho_{\pi}(s,a)}{\mathbb{E}}\left[\nabla_{\theta}\log\pi_{\theta}(a \vert s)Q_i(s,a)\right],
    \end{aligned}
  \end{equation}
  where
  \begin{equation}
    \label{equ:Q}
    Q^{(i)}(s_t, a_t) = \underset{\rho_{\pi}(s, a)}{\mathbb{E}}\left[\sum_{l=0}^{T-t}\gamma^{l}r^{(i)}(s, a)\right],
  \end{equation}
  and
  \begin{equation}
      r^{(i)}(s, a) = \max\left\{D_{\rm MMD}(x, \mathcal{M})-\delta, 0\right\}.
  \end{equation}
\end{restatable}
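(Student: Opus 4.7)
\medskip

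\noindent\textbf{Proof proposal.} The plan is to recognize the left-hand side as the gradient of an expected (discounted) return in a surrogate MDP whose reward is the clipped MMD penalty, and then invoke the classical policy gradient theorem. Concretely, define the intrinsic reward $r^{(i)}(s,a) = \max\{D_{\mathrm{MMD}}(x,\mathcal{M})-\delta,0\}$ with $x=(s,a)$, and treat $r^{(i)}$ as a deterministic function of the state-action pair (so its $\theta$-dependence enters only through the sampling distribution). Then the quantity being differentiated is
\begin{equation*}
\mathbb{E}\bigl[\max\{D_{\mathrm{MMD}}(x,\rho_\mu)-\delta,0\}\bigr]
= \mathbb{E}_{(s,a)\sim\rho_\pi}\bigl[r^{(i)}(s,a)\bigr],
\end{equation*}
which, up to the usual normalizing factor, equals the expected discounted sum $\mathbb{E}_{\pi_\theta}[\sum_{t\ge 0}\gamma^{t}r^{(i)}(s_t,a_t)]$ with $Q^{(i)}$ playing the role of the corresponding action-value function. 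This casts the lemma as a direct instance of the standard policy gradient identity.

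First, I would factor the visitation measure as $\rho_\pi(s,a) = d^{\pi}(s)\,\pi_\theta(a\mid s)$, where $d^{\pi}(s) = (1-\gamma)\sum_{t\ge 0}\gamma^{t}\Pr_{\pi_\theta}(s_t=s\mid s_0\sim\rho_0)$ is the discounted state-occupancy measure. Since $r^{(i)}$ is independent of $\theta$, the product rule gives two contributions: one from $\nabla_\theta\pi_\theta(a\mid s)$, handled immediately via the log-derivative identity $\nabla_\theta\pi_\theta(a\mid s)=\pi_\theta(a\mid s)\nabla_\theta\log\pi_\theta(a\mid s)$; and one from $\nabla_\theta d^{\pi}(s)$, which is resolved by the standard unrolling recursion in the proof of the policy gradient theorem (Sutton et al.). Summing the contributions along the rollout and collecting terms by their first-visit state-action pair $(s_t,a_t)$ re-bundles the tail sum into $Q^{(i)}(s_t,a_t)=\mathbb{E}[\sum_{l=0}^{T-t}\gamma^{l}r^{(i)}(s_{t+l},a_{t+l})\mid s_t,a_t]$, yielding exactly the claimed identity
\begin{equation*}
\nabla_\theta\,\mathbb{E}\bigl[\max\{D_{\mathrm{MMD}}(x,\rho_\mu)-\delta,0\}\bigr]
= \mathbb{E}_{\rho_\pi(s,a)}\bigl[\nabla_\theta\log\pi_\theta(a\mid s)\,Q^{(i)}(s,a)\bigr].
\end{equation*}

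The main obstacle I anticipate is the $\theta$-dependence of $d^{\pi}$: one must push the gradient through the occupancy measure rather than only through $\pi_\theta(a\mid s)$, and this is precisely where the cancellation of the $\nabla_\theta d^{\pi}(s)$ term into a future-reward sum (producing $Q^{(i)}$ instead of just $r^{(i)}$) has to be justified. A minor secondary issue is the non-smoothness of $\max\{\cdot,0\}$ at zero; generically $D_{\mathrm{MMD}}(x,\mathcal{M})=\delta$ occurs on a measure-zero set, so one may proceed with the subgradient (the indicator that the constraint is violated) without affecting the final identity. Assuming fixed $\mathcal{M}$ and treating the on-policy batch $\mathcal{B}$ used to evaluate $D_{\mathrm{MMD}}$ as a stop-gradient sample (consistent with how $r^{(i)}$ is used as a reward in Algorithm~\ref{algo:tgppo}), the two steps above combine to give~\eqref{equ:nabla_E_D_mmd} directly.
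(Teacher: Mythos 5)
Your proposal takes essentially the same route as the paper: define the clipped MMD penalty as an intrinsic reward $r^{(i)}$, view the expectation as the return of a surrogate MDP, and invoke the policy gradient theorem of Sutton et al.\ to obtain the log-likelihood-ratio form with $Q^{(i)}$. In fact you supply more detail than the paper, which explicitly omits the derivation; your added remarks on differentiating through the occupancy measure, the measure-zero non-smoothness of $\max\{\cdot,0\}$, and treating $\mathcal{M}$ and the batch $\mathcal{B}$ as stop-gradients are exactly the points the paper leaves implicit.
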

\begin{proof}
Let the symbol $x$ represent a current state-action pair, i.e. $x = (s, a)$. We define a new intrinsic reward function $r^{(i)}(s, a)$ derived from the MMD distance as follows:
\begin{equation}
r^{(i)}(s, a) = \max\left\{D_{\rm MMD}(x, \mathcal{M}) - \delta, 0\right\}.
\end{equation}
Moreover, consider $Q^{(i)}(\cdot, \cdot)$ as the $Q$-function calculated based on the distance-induced reward function $r^{(i)}(s, a)$:
\begin{equation}
Q^{(i)}(s_t, a_t) = \mathbb{E}\left[\sum_{l=0}^{T-t}\gamma^{l}r^{(i)}(s_{t+l}, a_{t+l})\right].
\end{equation}
\begin{figure*}[ht]
  \centering
  \subfloat[]{
    \includegraphics[width=7cm]{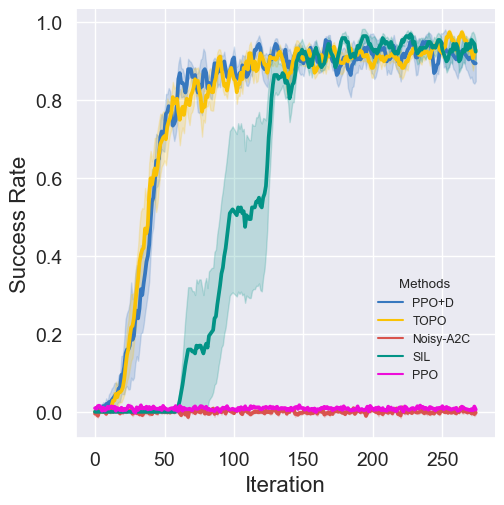}
    \label{fig:kdt_rate}
  }
  \subfloat[]{
    \includegraphics[width=7cm]{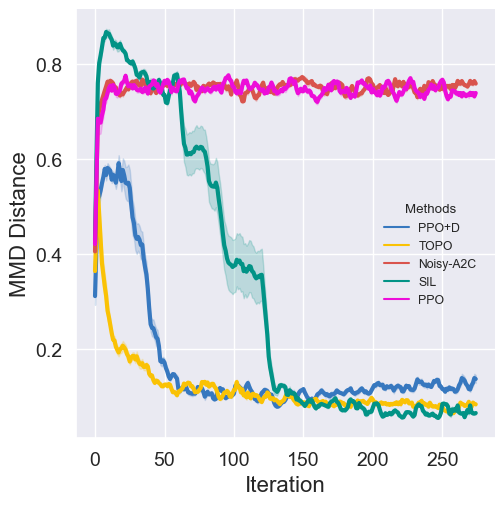}
    \label{fig:kdt_mmd}
  }
  \caption{Evaluation of TOPO in the discrete Key-Door-Treasure task: (a) The learning curves of success rate; (b) The trend of the MMD distance between the current policy and demonstrations;}
  \label{fig:kdt_results}
\end{figure*}
We can utilize the policy gradient theorem~\cite{sutton1999policy} to calculate the gradient of~\eqref{equ: P_I} easily.
\begin{equation}
\begin{aligned}
&\nabla_{\theta} \mathbb{E}\left[\max\left\{D_{\rm MMD}(x, \rho_\mu) - \delta, 0\right\}\right]\\
=&\underset{\rho_{\pi}(s,a)}{\mathbb{E}}\left[\nabla_{\theta}\log\pi_{\theta}(a \vert s)Q_i(s,a)\right].
\end{aligned}
\end{equation}
For simplicity, we omit the specific derivation process.
\end{proof}

\begin{figure*}[ht]
\centering
\subfloat[]{
  \includegraphics[width=7cm]{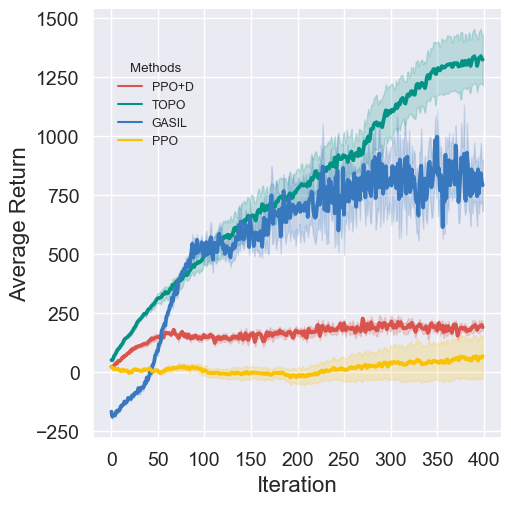}
  \label{fig:cheetah_return}
}
\subfloat[]{
  \includegraphics[width=7cm]{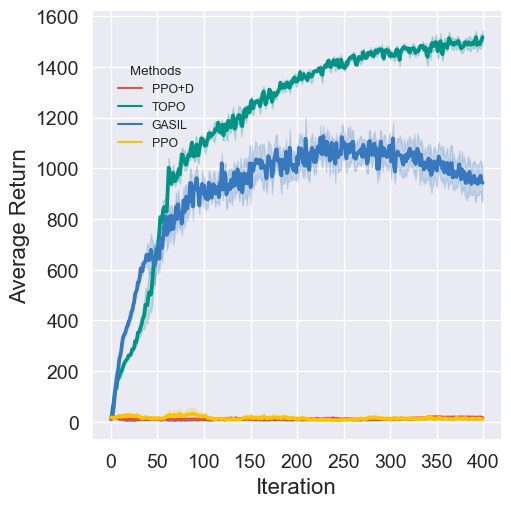}
  \label{fig:hopper_return}
}
\caption{Evaluation of TOPO on the SparseHalfCheetah and SparseHopper task.}
\label{fig:locomotion_results}
\end{figure*}

\section{Experiments}
This section presents a thorough assessment of the TOPO algorithm. We evaluate TOPO's performance across continuous and discrete control tasks, as outlined in Section~\ref{sec: setup}. Our experimental results demonstrate that TOPO outperforms alternative baseline methods regarding average return and learning rate.

\subsection{Experimental Settings}
\label{sec: setup}
The initial assessment of TOPO's performance was conducted in the Key-Door-Treasure task, as illustrated in Fig.~\ref{fig:kdt_maze}. This task's state-action space is discrete, and the grid-world maze dimensions are $26\times 36$. In each episode, the agent commences from a fixed starting point in the bottom-left room. The episode's duration is fixed, terminating immediately upon the agent discovering the treasure. A positive reward 200 is exclusively granted when the treasure is reached, and no rewards are provided in other instances. In each episode, the RL agent first observes its environmental information and then generates a possible action based on its current policy. The set of all possible actions is \textit{go south, north, east,} or \textit{west}. The agent should complete the following three tasks sequentially to obtain the treasure: collecting the door key (K), opening the door (D) using the key, and then entering the upper-right room and obtaining the treasure (T). TOPO's performance was compared against four baseline methods: PPO~\cite{schulman2017proximal}, SIL~\cite{oh2018self}, PPO+D~\cite{libardi2021guided}, and Noisy-A2C~\cite{fortunato2019noisy}.
%
% \begin{figure}[ht]
%   \centering
%   \includegraphics[width=3.5cm]{sparse_maze}
%   \caption{Key-Door-Treasure domain.}
%   \label{fig:kdt_maze}
% \end{figure}

As depicted in Figs.~\ref{fig:halfcheetah_envs} and~\ref{fig:hopper_envs}, to further demonstrate the TOPO's superior advantage in more challenging environments, two established MuJoCo locomotion control agents, HalfCheetah and Hopper, were adapted. In this manner, we yielded two novel agents: SparseHalfCheetah and SparseHopper. These agents exclusively receive rewards based on forward velocity when the centers of the robots' mass have advanced toward a specific direction beyond a predetermined threshold distance; otherwise, no positive rewards are accrued. Specifically, the predetermined threshold distance for SparseCheetah is set at 10 units, while 1 unit for SparseHopper. During training, the agent obtains the environmental observations and performs a possible action outputted by the current policy. Additionally, the movement of the agent can result in energy punishment, which encourages the agent to minimize joint control torque to decrease the energy loss. TOPO's performance was compared against three state-of-the-art baseline methods: PPO~\cite{schulman2017proximal}, PPO+D~\cite{libardi2021guided}, and GASIL~\cite{guo2018generative}.
%
% \begin{figure}[ht]
%   \centering
%   \subfloat[]{
%     \includegraphics[width=3.5cm]{HalfCheetah.pdf}
%     \label{fig:halfcheetah_envs}
%   }
%   \subfloat[]{
%     \includegraphics[width=3.5cm]{Hopper.pdf}
%     \label{fig:hopper_envs}
%   }
%   \caption{(a) SparseHalfCheetah; (b) SparseHopper.}
%   \label{fig:conti_envs}
% \end{figure}

\subsection{Results in the Key-Door-Treasure domain}
Depicted in Fig.~\ref{fig:kdt_maze}, in the context of the key-door-treasure scenario, obtaining the ultimate treasure requires the agent to sequentially accomplish three specific tasks: obtaining the door key, unlocking and opening the door, and ultimately obtaining the treasure. It is important to note that, for fair evaluation, the same expert demonstrations were provided for SIL at the beginning of training. The results are presented in Fig.~\ref{fig:kdt_rate}. The PPO agent, often entangled in a sub-optimal policy solely focused on the key acquisition, fails to achieve optimal rewards upon reaching the treasure. Noisy-A2C exhibits analogous training outcomes to PPO. In contrast, the robust baselines, SIL and PPO+D, demonstrate accelerated learning facilitated by the exploration bonus derived from the demonstration data, leading them to discover the treasure. Notably, TOPO attains competitive training performance comparable to PPO+D, surpassing SIL in this task. This outcome signifies TOPO's adeptness in the efficient exploration of environments by leveraging demonstration trajectories to redefine a constrained optimization problem. Fig.~\ref{fig:kdt_mmd} illustrates TOPO's effective reduction of the MMD distance between the current policy and expert trajectories.

\subsection{Comparisons on locomotion control tasks}
As demonstrated in Figs.~\ref{fig:cheetah_return} and~\ref{fig:hopper_return}, TOPO outperforms other baseline methods in the SparseHopper and SparseCheetah tasks. Specifically, TOPO demonstrates accelerated learning during policy optimization, attaining a higher final return after training. GASIL focuses solely on disparities between the agent's behavior policy and expert trajectories, constructing a reward function based on this discrepancy. However, this approach places a premium on sample quality, limiting its competitiveness against TOPO. PPO+D relies exclusively on the advantage information from demonstrations for policy gradient computation, resulting in inefficient policy optimization with offline demonstrations. This discrepancy in approach contributes to the performance gap between PPO+D and TOPO. Noteworthy is the stagnation in the changing trend of PPO+D's average return on the SparseHopper task throughout training. This fact underscores TOPO's efficacy in exploration by incorporating distribution information from sparse-reward demonstrations and defining a dense reward function based on the MMD distance.

\section{Conclusions}
Presented in this study is TOPO, a method devised to expedite and enhance online Reinforcement Learning (RL) through the utilization of offline demonstrations in tasks characterized by sparse rewards. TOPO eliminates the necessity for complex reward function models, thereby reducing the dependency on a large volume of demonstrations. The central idea revolves around training a policy to align its state-action visitation distribution with offline demonstrations, treating these trajectories as instructive guides. Specifically, a novel metric for trajectory distance based on Maximum Mean Discrepancy (MMD) is introduced and the policy optimization problem is reformulated as a distance-constrained optimization task. Consequently, this distance-constrained optimization problem can be transformed into a policy-gradient algorithm, which incorporates intrinsic distance rewards obtained from expert trajectories. The proposed algorithm undergoes extensive evaluation across several benchmark control tasks. The experimental results highlight TOPO's superiority over baseline methods regarding the ability to avoid local optima and achieve diverse exploration.

% references
\bibliographystyle{IEEEtran_bst}
\bibliography{reference}

\end{document}